\newcommand{\tr}{\operatorname{tr}} 
\newcommand{\sign}{\operatorname{sign}}
\newtheorem{thm}{Theorem}[section]
\newtheorem{prop}[thm]{Proposition}
\newtheorem{cor}[thm]{Corollary}
\title{Graphical continuous Lyapunov models}
\author{ {\bf Gherardo Varando} \\
Department of Mathematical Sciences\\
University of Copenhagen \\
Copenhagen, Denmark \\
\And
{\bf Niels Richard Hansen}   \\
Department of Mathematical Sciences\\
University of Copenhagen \\
Copenhagen, Denmark \\
}
\begin{document}
\maketitle
\begin{abstract}
	      The linear Lyapunov
        equation of a covariance matrix parametrizes the 
        equilibrium covariance matrix of a stochastic
        process. This parametrization can be interpreted as
				a new graphical model class, and we show 
				how the model class behaves under marginalization
        and introduce a method for structure learning via
        $\ell_1$-penalized loss minimization. Our proposed method is
        demonstrated to outperform alternative structure learning
        algorithms in a simulation study, and we illustrate its
        application for protein phosphorylation network reconstruction.
\end{abstract}

\section{INTRODUCTION}

Path analysis as introduced by \cite{wright1921, wright1934} illustrates how
covariance computations in linear models can benefit from a
graphical model representation. Today there is a vast literature on
linear structural equation models and their corresponding algebraic
and graphical model theory, see e.g. \cite{drton2018}. Within this framework, the standard 
parametrization specifies the covariance matrix $\Sigma$ as a solution to the equation
\begin{equation} \label{eq:SEM}
(I - \Lambda)^T \Sigma (I - \Lambda) = \Omega
\end{equation} 
for matrix parameters $\Lambda$ and $\Omega$.
The associated mixed graph has directed edges and bidirected edges
determined by the nonzero entries of $\Lambda$ and $\Omega$,
respectively. If we fix an acyclic graph, say, the framework provides a
parametrization of the observables from a directed acyclic model
-- potentially with latent variables -- see \citep{Richardson:2002}. In the
cyclic case the parametrization can, moreover, be interpreted as an equilibrium
distribution for a deterministic process whenever the spectrum of
$\Lambda$ is inside the unit circle, see e.g. \citep{Hyttinen:2012}.

It is, however, well known that for certain continuous time
stochastic processes the equilibrium covariance matrix does not have a
simple graphical representation using the parametrization above, see e.g.
\citep{Mogensen:2018uai}. Instead it has an alternative 
parametrization corresponding to the graphical
representation of the dynamics of the process. In this
parametrization, $\Sigma$ is the solution to the continuous Lyapunov equation,
\begin{equation}
	\label{eq:CLE}
	B \Sigma + \Sigma B^T  + C = 0   
\end{equation}
where $B$ and $C$ are matrices parametrizing $\Sigma$.

Models given by \eqref{eq:CLE} are of practical interest when 
only cross-sectional data from the stochastic process can be
obtained. This is the case for biological systems such as gene
regulatory or protein signalling networks, where cells are destroyed in
the measurement process. Existing methods based on correlation or
mutual information, such as the ARACNe method by \cite{basso2005}, the
use of directed graphical models, \citep{sachs2005}, or
the graphical lasso giving undirected graphs, \citep{friedman2007},
cannot represent feedback processes, whereas cycles can be encoded
naturally by \eqref{eq:CLE}.



The main objective of this paper is to develop the framework
of graphical models parametrized by \eqref{eq:CLE} and to introduce a
learning algorithm of the graphical structure. In the preparation of
this paper we found that similar ideas were recently considered by
\cite{young2019} and \cite{fitch2019}. The work by \cite{fitch2019}
is based on \eqref{eq:CLE} and a learning
algorithm was proposed, while \cite{young2019} considered
the vector autoregressive model, whose equilibrium covariance matrix
solves the \emph{discrete} Lyapunov equation. 

We connect in this paper the models parametrized by \eqref{eq:CLE} 
to the concept of local independence for stochastic processes, and we  
present new results about these models as graphical models. To 
this end, recall that 
Wright's path analysis lead to polynomial expressions of the entries in 
$\Sigma$ in terms of the nonzero entries in $\Lambda$ and $\Omega$. 
Such formulas are in modern terminology
known as \emph{trek rules}, and they explain
how graphical structural constraints are encoded into $\Sigma$. By 
introducing trek seperation,
\cite{sullivant:2010} gave, for instance, a complete graph-theoretic characterization
in the acyclic case of when submatrices of $\Sigma$ will drop
rank. Another example is the half-trek criterion for generic
identifiability by \cite{foygel2012}.

In this paper we associate a mixed graph to the covariance matrix
solving \eqref{eq:CLE} and establish a version of trek rules when $B$
is a stable matrix. We use this to introduce a novel graphical
projection yielding a parametrization of marginalized models in terms
of solutions to Lyapunov equations. To fit models parametrized by
\eqref{eq:CLE}, but with an unknown graphical structure, we propose
$\ell_1$-penalized loss minimization using either the Frobenius norm
or the Gaussian log-likelihood loss. They outperformed the learning algorithm proposed by
\cite{fitch2019} in a simulation study, and we illustrate the use of 
the method for protein phosphorylation network discovery using data 
from \cite{sachs2005}.

\section{GRAPHICAL CONTINUOUS LYAPUNOV MODELS}

We will consider models of covariance matrices determined as solutions
to the Lyapunov equation \eqref{eq:CLE} and parametrized by the
matrices $B$ and $C$. Note that \eqref{eq:CLE} can be written in 
tensor product form as the linear equation
$$(B \otimes I + I \otimes B) \mathrm{vec}(\Sigma) = - \mathrm{vec}(C).$$
The eigenvalues of the \emph{kronecker sum} $B \otimes I + I \otimes B$
are sums of pairs of eigenvalues of $B$, \citep[Theorem 4.4.5]{Horn:1991}.
The solution to \eqref{eq:CLE} is thus unique if and only if 
the sum of any two eigenvalues of $B$ is nonzero, 
in which case $\Sigma(B,C)$ will denote the unique solution. 

Some notation and terminology is needed to study solutions of
\eqref{eq:CLE}. Introduce $\mathrm{Mat}_0(p)$ as the set of $p \times p$
matrices that do not have two eigenvalues summing to zero, and
let $\mathrm{Sym}(p)$ denote the set of symmetric $p \times p$ matrices. Let
$\mathrm{Stab}(p)$ denote the set of stable $p
\times p$ matrices, that is, matrices whose eigenvalues all have a strictly
negative real part. Obviously, $\mathrm{Stab}(p)
\subseteq \mathrm{Mat}_0(p)$. The set of $p \times p$ positive
definite matrices is denoted $\mathrm{PD}(p)$. 

The sparsity patterns of the parameters $B$ and $C$ will be encoded via a 
mixed graph, that is, a graph $\mathcal{G} = ([p], E)$ with vertices $[p] =
\{1, \ldots, p\}$ and with $E$ containing directed as well as bidirected edges. 
Self loops and multiple edges between two nodes are allowed. 
We say that a pair of matrices $(B,C) \in \mathrm{Mat}_0(p) \times 
\mathrm{Sym}(p)$ are compatible with a mixed graph 
$\mathcal{G}$ if $B_{ji} \neq 0 $ implies $i \rightarrow j$ and $C_{ij} \neq 0$ implies
$i \leftrightarrow j$. The set of $\mathcal{G}$-compatible matrix
pairs is denoted $\Xi_{\mathcal{G}} \subseteq \mathrm{Mat}_0(p) \times 
\mathrm{Sym}(p)$, and $\Theta_{\mathcal{G}} = \Xi_\mathcal{G} \cap 
\left( \mathrm{Stab}(p) \times \mathrm{PD}(p) \right).$

Given a mixed graph $\mathcal{G}$, the map $(B, C) \mapsto \Sigma(B, C)$ is
well defined on $\Xi_{\mathcal{G}}$ with image in
$\mathrm{Sym}(p)$. The restriction of this map to
$\Theta_{\mathcal{G}}$ has image in $\mathrm{PD}(p)$, which follows
from Proposition \ref{prop:intrep} below. Let $\mathcal{M}_{\mathcal{G}} =
\Sigma(\Theta_{\mathcal{G}}) \subseteq \mathrm{PD}(p)$ denote the
image of $\Theta_{\mathcal{G}}$, which we
call the \emph{graphical continuous Lyapunov model} (GCLM) with graph 
$\mathcal{G}$. The \emph{extended} GCLM is $\mathcal{M}_{\mathcal{G}}^e =
\Sigma(\Xi_{\mathcal{G}})$.

\subsection{STOCHASTIC PROCESSES AND LOCAL INDEPENDENCE}
\label{sec:li}

To motivate \eqref{eq:CLE} consider the $p$-dimensional
Ornstein-Uhlenbeck process given as a solution to the stochastic
differential equation 
\begin{equation} \label{eq:OU}
	dX_t = B(X_t - a)dt + D dW_t  
\end{equation} 
where $B$ and $D$ are $p \times p$ matrices, $a \in \mathbb{R}^p$ and 
$W_t$ is a standard Brownian motion in $\mathbb{R}^p$. If $B$ is a
stable matrix, \eqref{eq:OU} has a Gaussian equilibrium 
distribution with covariance matrix $\Sigma(B,DD^T)$, see e.g. 
\citep[Theorem 2.12]{Jacobsen:1992}. Thus solutions
of \eqref{eq:CLE} arise as equilibrium covariances for continuous
time stochastic processes.  

We call \eqref{eq:OU} a structural causal stochastic differential
equation if it adequately captures effects of interventions, see
\citep{Sokol:2014}. In this case the directed part of the mixed graph
$\mathcal{G}$ -- introduced above in terms of $B$ -- represents direct causal
effects. Moreover, if there is no directed edge from $i$ to $j$, the corresponding
coordinates of the stochastic process satisfy an infinitesimal
conditional independence, and we say that $X_t^j$ is locally
independent of $X_t^i$. The directed part of $\mathcal{G}$ is, by Definition 12 in
\cite{Mogensen:2018uai}, also identical to the local independence graph
determined by \eqref{eq:OU}.  

If $C = DD^T$ is diagonal, the local independence graph has the global Markov
property for local independence, see \cite{Mogensen:2018uai},  who also gave a
learning algorithm for partially observed systems. That general algorithm learns
an equivalence class of local independence graphs  by local independence
queries. In the specific case of solutions to \eqref{eq:OU}, the equilibrium
covariance matrix also carries information about the local  independence graph
as encoded via the Lyapunov equation. As we will show below, graphical
representations of the marginalization of the equilibrium covariance matrix
requires  a new graphical projection that introduces additional bidirected
edges,  but in any case, at least for diagonal $C$, the directed edges of
$\mathcal{G}$ have an interpretation as local dependences -- and even direct
causal effects if \eqref{eq:OU} is a structural causal stochastic differential
equation. 

\subsection{TREKS}

To obtain a graphical representation of $\Sigma = \Sigma(B, C) \in \mathcal{M}_{\mathcal{G}}$ for
a mixed graph $\mathcal{G}$ we introduce 
\begin{equation} \label{eq:intrepPartial}
\Sigma(s) = \int_0^{s} e^{uB} C
e^{uB^T} \mathrm{d} u.
\end{equation}
The following is a well known result, see \citep{Jacobsen:1992}
or \cite[Theorem 2]{fitch2019}, but we include it for
completeness. 

\begin{prop} \label{prop:intrep}
  For $(B, C) \in \Theta_{\mathcal{G}}$
\begin{equation} \label{eq:intrep}
\Sigma(B, C) = \lim_{s \to \infty} \Sigma(s)  = \int_0^{\infty} e^{uB} C
e^{uB^T} \mathrm{d} u.
\end{equation}
\end{prop}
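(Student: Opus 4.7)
The plan is to verify the two claims separately: first the convergence of the integral and identification with $\Sigma(B,C)$, and second the positive definiteness implicit in the statement that the restriction maps into $\mathrm{PD}(p)$.

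First I would show that $\Sigma(s)$ has a limit as $s\to\infty$. Since $B$ is stable, every eigenvalue has real part at most $-\alpha$ for some $\alpha>0$. Using the Jordan decomposition (or any standard norm estimate on matrix exponentials), one obtains $\|e^{uB}\| \le K(1+u)^{p-1}e^{-\alpha u}$ for some constant $K$, so $\|e^{uB} C e^{uB^T}\| \le K^2\|C\|(1+u)^{2(p-1)}e^{-2\alpha u}$. This integrand is integrable on $[0,\infty)$, so the limit $\Sigma := \lim_{s\to\infty}\Sigma(s)$ exists and moreover $e^{sB}Ce^{sB^T}\to 0$.

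Next I would verify that this limit solves \eqref{eq:CLE}. Set $F(u) = e^{uB}Ce^{uB^T}$; then $F'(u) = BF(u) + F(u)B^T$ by the product rule and the identity $\frac{d}{du}e^{uB} = Be^{uB} = e^{uB}B$. Integrating from $0$ to $s$ and using linearity of $M\mapsto BM+MB^T$ gives
\begin{equation*}
e^{sB}Ce^{sB^T} - C \;=\; B\Sigma(s) + \Sigma(s)B^T.
\end{equation*}
Letting $s\to\infty$ and using the decay established above yields $B\Sigma + \Sigma B^T + C = 0$. Since $\mathrm{Stab}(p)\subseteq\mathrm{Mat}_0(p)$, the solution is unique (as already noted in the excerpt, from the spectrum of the Kronecker sum), so $\Sigma = \Sigma(B,C)$.

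Finally, for positive definiteness, I would note that $F(u) = (e^{uB})C(e^{uB})^T$ is positive definite for every $u\ge 0$ because $C\in\mathrm{PD}(p)$ and $e^{uB}$ is invertible: for any $v\neq 0$, $v^T F(u)v = (e^{uB^T}v)^T C (e^{uB^T}v) > 0$. Integrating this strictly positive continuous function of $u$ against any fixed $v\neq 0$ over $[0,\infty)$ yields $v^T\Sigma v > 0$, giving $\Sigma\in\mathrm{PD}(p)$. The only mildly technical step is the norm estimate on $e^{uB}$ for stable but possibly non-diagonalizable $B$; everything else is a standard calculation with matrix exponentials, and the uniqueness half of the equivalence was already established before the proposition was stated.
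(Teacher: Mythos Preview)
Your proof is correct and follows essentially the same approach as the paper: show convergence of the integral using stability of $B$, verify via the fundamental theorem of calculus that the integral solves the Lyapunov equation, and invoke uniqueness. The paper is terser (it asserts convergence without the explicit Jordan-type norm bound and works directly with the improper integral rather than passing through $\Sigma(s)$), and it states the positive definiteness consequence separately after the proof rather than including it, but the substance is the same.
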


\begin{proof} First note that stability of $B$ ensures that
  the solution to the Lyapunov equation is unique. It also ensures
  that the integral in \eqref{eq:intrep} is convergent. We see that if
  $\Sigma$ is given by  the r.h.s. of \eqref{eq:intrep}  then
\begin{align*}
B \Sigma + \Sigma B^T & = \int_0^{\infty} B e^{uB} C
e^{uB^T} + e^{uB} C
e^{uB^T} B^T \mathrm{d} u \\
& = \int_0^{\infty} \frac{\mathrm{d}}{\mathrm{d}u} e^{uB} C
e^{uB^T} \mathrm{d} u = - C,
\end{align*}
which shows that $\Sigma$ solves \eqref{eq:CLE}. 
\end{proof}

The representation \eqref{eq:intrep} implies that 
$\Sigma$ is positive definite if $C$ is, which shows that $\mathcal{M}_{\mathcal{G}}
\subseteq \mathrm{PD}(p)$ as claimed above. 

A \emph{trek} from $i$ to $j$, denoted $i \leadsto j$, is a walk of the form 
$$\tau: \ \underbrace{i \leftarrow \cdots \leftarrow i_1}_{n(\tau)} 
\leftarrow k \leftrightarrow l \rightarrow \underbrace{j_1
  \rightarrow \cdots \rightarrow j}_{m(\tau)}$$
where $k, l \in [p]$ are connected by a bidirected edge. Thus a trek 
consists of a left hand side, which is a directed walk 
$k \rightarrow i_1 \rightarrow \ldots \rightarrow i$ of length 
$n(\tau)$, and a right hand side, which is a directed walk  
$l \rightarrow j_1 \rightarrow \ldots \rightarrow j$ of 
length $m(\tau)$. Those two walks are connected by the bidirected 
edge $k \leftrightarrow l$. For every trek $i \leadsto j$ there is 
a reversed trek, $j \leadsto i$, corresponding to interchanging 
the roles of the left and right hand sides of the trek. Note that $n(\tau) = 0$
with $i = k$ as well as $m(\tau) = 0$ with $j = l$ are allowed. Define also 
$$\kappa(s, \tau) = \frac{s^{(n(\tau) + m(\tau) + 1)}}{(n(\tau) + m(\tau) + 1)n(\tau)!m(\tau)!}$$
for any trek $\tau$ and $s \in \mathbb{R}$, and introduce  
for $(B, C) \in \Theta_{\mathcal{G}}$  and a trek $\tau$ the \emph{trek weight}
$$\omega(B, C, \tau) = C_{k, l} \prod_{g \rightarrow h \in
  \tau} B_{hg}.$$

\begin{prop} \label{prop:trek}
For $(B, C) \in \Theta_{\mathcal{G}}$ 
$$\Sigma(s)_{ij} = \sum_{\tau \in \mathcal{T}(i,j)} \kappa(s, \tau)
\omega(B, C, \tau)$$
where $\mathcal{T}(i,j)$ denotes the set of all treks from $i$ to $j$. 
\end{prop}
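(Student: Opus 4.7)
The plan is to expand the matrix exponentials inside the integral defining $\Sigma(s)$ as power series, combine the resulting products into a double sum indexed by walks, and recognize each term as a trek contribution. Concretely, since for any finite $u$ the series
$$e^{uB} = \sum_{n=0}^{\infty} \frac{u^n B^n}{n!}, \qquad e^{uB^T} = \sum_{m=0}^{\infty} \frac{u^m (B^T)^m}{m!}$$
converge absolutely (uniformly on $[0,s]$), I can multiply them and, reading off the $(i,j)$ entry, write
$$\bigl(e^{uB} C e^{uB^T}\bigr)_{ij} = \sum_{n,m \geq 0} \frac{u^{n+m}}{n!\,m!} \sum_{k,l \in [p]} (B^n)_{ik}\,C_{kl}\,(B^m)_{jl}.$$

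The next step is the combinatorial identification. The entry $(B^n)_{ik}$ equals the sum over directed walks $k = i_0 \to i_1 \to \cdots \to i_n = i$ of length $n$ of the product $B_{i_1,i_0} B_{i_2,i_1} \cdots B_{i_n,i_{n-1}}$ of edge weights; similarly $(B^m)_{jl}$ is a sum over directed walks $l \to j_1 \to \cdots \to j$. For each quadruple $(n,m,k,l)$ and each such pair of walks, gluing them via the bidirected edge $k \leftrightarrow l$ produces exactly a trek $\tau \in \mathcal{T}(i,j)$ with $n(\tau)=n$, $m(\tau)=m$, and conversely every trek arises uniquely this way. The product of edge weights along $\tau$ together with the factor $C_{kl}$ is by definition $\omega(B,C,\tau)$, giving
$$\bigl(e^{uB} C e^{uB^T}\bigr)_{ij} = \sum_{\tau \in \mathcal{T}(i,j)} \frac{u^{n(\tau)+m(\tau)}}{n(\tau)!\,m(\tau)!}\, \omega(B,C,\tau).$$

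Finally, I integrate from $0$ to $s$. The interchange of sum and integral is justified by the uniform absolute convergence on $[0,s]$ of the Taylor series of the two matrix exponentials (so that the rearranged series above also converges absolutely and uniformly on $[0,s]$). Using $\int_0^s u^{n+m}\,\mathrm{d}u = s^{n+m+1}/(n+m+1)$ yields
$$\Sigma(s)_{ij} = \sum_{\tau \in \mathcal{T}(i,j)} \frac{s^{n(\tau)+m(\tau)+1}}{(n(\tau)+m(\tau)+1)\,n(\tau)!\,m(\tau)!}\,\omega(B,C,\tau) = \sum_{\tau \in \mathcal{T}(i,j)} \kappa(s,\tau)\,\omega(B,C,\tau),$$
as claimed.

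The routine parts are the power-series manipulations; the one place to be careful is the bijection between the index set $\{(n,m,k,l,\text{walks})\}$ and the set of treks $\mathcal{T}(i,j)$, in particular checking that the degenerate cases $n(\tau)=0$ (with $i=k$) and $m(\tau)=0$ (with $j=l$) are handled by the $n=0$ and $m=0$ terms of the series, where $B^0 = I$. This is the main conceptual step, though it is essentially a bookkeeping exercise once the expansion is written down.
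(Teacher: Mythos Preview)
Your proof is correct and follows essentially the same approach as the paper: expand the matrix exponentials as power series, integrate term by term, and identify the resulting sum over $(n,m,k,l)$ and walk pairs with the sum over treks. In fact you are more careful than the paper, which leaves both the interchange of sum and integral and the bijection with $\mathcal{T}(i,j)$ implicit.
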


\begin{proof} Using the series expansion of the matrix exponential we
  find that 
{\small
\begin{align*}
\Sigma(s)_{ij} & = \int_0^{s} \sum_{n=0}^{\infty} \sum_{m=0}^{\infty}
                          \sum_{k,l=1}^p 
                          \frac{t^n t^m}{n!m!} (B^n)_{ik}
                          C_{kl} (B^m)_{jl} \mathrm{d} t \\
& = \sum_{n=0}^{\infty} \sum_{m=0}^{\infty} \sum_{k,l=1}^p 
                          \frac{s^{(n + m + 1)}}{(n + m + 1)n!m!} (B^n)_{ik}
                          C_{kl} (B^m)_{jl} \\
& = \sum_{\tau \in \mathcal{T}(i,j)} \kappa(s, \tau) \omega(B, C,
  \tau). \qedhere
\end{align*}
}
\end{proof}

The following corollary is an immediate consequence of Propositions
\ref{prop:intrep} and \ref{prop:trek}.

\begin{cor} \label{cor:trek} If $\Sigma \in \mathcal{M}_{\mathcal{G}}$ and there is
  no trek from $i$ to $j$ in $\mathcal{G}$ then $\Sigma_{ij} = 0$. 
\end{cor}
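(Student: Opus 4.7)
The plan is to show the corollary falls out directly from combining the two preceding propositions, with essentially no new work needed; accordingly I expect a very short argument rather than a deep one.

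First I would fix $\Sigma = \Sigma(B,C) \in \mathcal{M}_{\mathcal{G}}$, so that $(B,C) \in \Theta_{\mathcal{G}}$, in particular $B \in \mathrm{Stab}(p)$ and $(B,C)$ is compatible with $\mathcal{G}$. By Proposition \ref{prop:trek}, for every $s \in \mathbb{R}$,
$$\Sigma(s)_{ij} = \sum_{\tau \in \mathcal{T}(i,j)} \kappa(s, \tau)\,\omega(B, C, \tau),$$
where the sum ranges over treks in $\mathcal{G}$ from $i$ to $j$. Under the hypothesis $\mathcal{T}(i,j) = \emptyset$, this is an empty sum, so $\Sigma(s)_{ij} = 0$ for every $s$.

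Next I would appeal to Proposition \ref{prop:intrep}: since $B$ is stable,
$$\Sigma_{ij} = \lim_{s \to \infty} \Sigma(s)_{ij} = 0,$$
which is the desired conclusion.

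The only point requiring a moment of care is verifying that the representation in Proposition \ref{prop:trek} really restricts the sum to treks that live in $\mathcal{G}$, as opposed to arbitrary treks in the complete graph on $[p]$. This is precisely the role played by compatibility in the definition of $\Theta_{\mathcal{G}}$: because $B_{hg} = 0$ whenever $g \rightarrow h$ is absent from $\mathcal{G}$ and $C_{kl} = 0$ whenever $k \leftrightarrow l$ is absent, the weight $\omega(B,C,\tau)$ vanishes on any walk using an edge not present in $\mathcal{G}$. Thus restricting to $\mathcal{T}(i,j)$ as a set of $\mathcal{G}$-treks is legitimate, and there is no genuine obstacle to the argument.
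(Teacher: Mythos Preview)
Your proposal is correct and matches the paper's own treatment: the paper states the corollary as an immediate consequence of Propositions~\ref{prop:intrep} and~\ref{prop:trek} without further argument, and you have simply spelled out the two-line deduction (empty trek set forces $\Sigma(s)_{ij}=0$, then take the limit). Your added remark about compatibility ensuring the trek sum is genuinely over $\mathcal{G}$-treks is a nice clarification but not a departure from the paper's approach.
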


\subsection{MARGINALIZATION}
\label{sec:marginalization}
Let $\Sigma$ be a $p' \times p'$ matrix that solves
the Lyapunov equation for given $B$ and $C$, and suppose that we only observe
variables corresponding to the top left $p \times p$ block, $\Sigma_{11}$, for $p < p'$. 
Writing out the Lyapunov equation in block matrix form gives four
coupled equations. The one corresponding to $\Sigma_{11}$ is the Lyapunov equation  
\begin{equation}
\label{eq:LyapunovMarg}
B_{11} \Sigma_{11} + \Sigma_{11}
B_{11}^T + \tilde{C} = 0
\end{equation}
with $\tilde{C} = B_{12} \Sigma_{21} + \Sigma_{12} B_{12}^T + C_{11}.$

When $C$ is symmetric so is $\tilde{C}$, but there is 
no guarantee that it is positive definite even if $C$ is so, nor that $B_{11}$ is
stable if $B$ is so. What we can show is that if
$\Sigma$ is a GCLM then $\Sigma_{11}$ is an extended GCLM. To do so we
will introduce a graphical projection map.

For $\mathcal{G} = ([p'], E)$ a mixed graph let $\mathcal{G}[p] =
([p], E[p])$ denote the projection onto the first $p < p'$ vertices defined as follows: 
for $i,j \in [p]$
\begin{itemize}
\item $i \rightarrow j \in E[p]$ if $i \rightarrow j \in E$ 
\item $i \leftrightarrow j \in E[p]$ if $i \leftrightarrow j \in E$
\item $i \leftrightarrow j \in E[p]$ if for some $k >
  p$ there is a trek from $i$ to $j$ of the forms $i \leftarrow
  k \leadsto j$ or  $i \leadsto k \rightarrow j$
\end{itemize}
Thus the projected graph retains all edges in $\mathcal{G}$ between
vertices in $[p]$. In addition, it has bidirected edges between
vertices $i,j \in [p]$ that are connected by a trek containing a
vertex not in $[p]$, which is directly connected to either $i$ or $j$ in
the trek. It should be noted that this \emph{is not} a standard latent
graph projection. For once, only bidirected arrows are added.

\begin{prop} If $\Sigma \in \mathcal{M}_{\mathcal{G}}$ and
  $B_{11} \in \mathrm{Mat}_0$ then 
$\Sigma_{11} \in \mathcal{M}^e_{\mathcal{G}[p]}$.
\end{prop}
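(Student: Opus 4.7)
The plan is to decompose the statement into two tasks: (1) exhibit a pair of matrices $(B', C')$ whose Lyapunov solution is $\Sigma_{11}$, and (2) verify that this pair belongs to $\Xi_{\mathcal{G}[p]}$. The candidate pair is already visible in the excerpt: take $B' = B_{11}$ and $C' = \tilde C = B_{12}\Sigma_{21} + \Sigma_{12} B_{12}^T + C_{11}$. The block-form computation preceding the proposition shows that $(B', C')$ satisfies the Lyapunov equation associated to $\Sigma_{11}$. Since the hypothesis $B_{11} \in \mathrm{Mat}_0$ guarantees uniqueness of solutions (as discussed just after equation \eqref{eq:CLE}), we get $\Sigma_{11} = \Sigma(B_{11}, \tilde C)$. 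The symmetry of $\tilde C$ follows from the symmetry of $C$ together with $\Sigma_{12} = \Sigma_{21}^T$.

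The substantive part is then verifying $(B_{11}, \tilde C) \in \Xi_{\mathcal{G}[p]}$. Compatibility of $B_{11}$ with the directed part of $\mathcal{G}[p]$ is essentially immediate: if $(B_{11})_{ji} \neq 0$ for $i,j \in [p]$, then $B_{ji} \neq 0$, so $i \to j \in E$, and by the first bullet in the definition of $\mathcal{G}[p]$ the edge persists in $E[p]$.

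The real work is to show $\tilde C_{ij} \neq 0 \Rightarrow i \leftrightarrow j \in E[p]$ for $i,j \in [p]$. Expanding,
\[
\tilde C_{ij} = C_{ij} + \sum_{k > p} B_{ik}\,\Sigma_{kj} + \sum_{k > p} \Sigma_{ik}\,B_{jk}.
\]
I would argue term-by-term. If the $C_{ij}$ contribution is nonzero, then $i \leftrightarrow j \in E$ and hence in $E[p]$ by the second bullet. If some summand $B_{ik}\Sigma_{kj}$ with $k > p$ is nonzero, then $B_{ik} \neq 0$ gives the directed edge $k \to i$ in $\mathcal{G}$, while $\Sigma_{kj} \neq 0$ forces (via the contrapositive of Corollary \ref{cor:trek}) the existence of a trek $k \leadsto j$ in $\mathcal{G}$. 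Concatenating these yields a trek of the form $i \leftarrow k \leadsto j$ with $k > p$, so by the third bullet $i \leftrightarrow j \in E[p]$. The symmetric summand $\Sigma_{ik} B_{jk}$ yields, by the same reasoning applied on the other side, a trek $i \leadsto k \rightarrow j$, again producing a bidirected edge in $E[p]$.

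The main obstacle is exactly this nonzero-to-trek argument: one must be careful that Corollary \ref{cor:trek} is applied in the original graph $\mathcal{G}$ (where $\Sigma \in \mathcal{M}_{\mathcal{G}}$ lives) and that the resulting trek, when concatenated with the directed edge coming from $B$, assembles into a trek of precisely one of the two forms listed in the projection definition. No issues arise about cancellations within a single summand because we only need one nonzero summand to produce the required edge; the definition of $\mathcal{G}[p]$ does not require that \emph{every} contribution produce a trek, only that some trek of the stipulated shape exist. Once this is laid out, the conclusion $\Sigma_{11} \in \mathcal{M}^e_{\mathcal{G}[p]}$ follows.
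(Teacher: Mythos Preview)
Your proposal is correct and follows essentially the same approach as the paper: exhibit $(B_{11},\tilde C)$ as the parameter pair, use $B_{11}\in\mathrm{Mat}_0$ for uniqueness, check compatibility of $B_{11}$ directly, and for $\tilde C$ expand entrywise and invoke the contrapositive of Corollary~\ref{cor:trek} to produce a trek of the form $i\leftarrow k\leadsto j$ (or its mirror) that triggers the third bullet of the projection definition. Your write-up is in fact slightly more explicit than the paper's about symmetry of $\tilde C$ and about why one nonzero summand suffices.
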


\begin{proof} It is clear from the definitions that $B_{11}$
  fulfills the $\mathcal{G}[p]$-compatibility requirement. Observe
  then that 
	$$\tilde{C}_{ij}  = C_{ij} + \sum_{k = p + 1}^{p'} \left( B_{ik} 
	\Sigma_{kj} + \Sigma_{ik} B_{jk}\right),$$ 
	which is symmetric in $i$ and $j$. If
$C_{ij} \neq 0$ then $i \leftrightarrow j$. If
$\tilde{C}_{ij} \neq 0$, but $C_{ij} = 0$, then there is a $k > p$
such that $B_{ik} \Sigma_{kj} \neq 0$ or $\Sigma_{ik} B_{jk} \neq
0$. In the first case this means that $\Sigma_{kj} \neq 0$, and by
Corollary \ref{cor:trek} there is a trek from $k$ to $j$. Now as
$B_{ik} \neq 0$ as well, we can extend the trek to the left with the
edge $k \to i$, and $i \leftrightarrow j$ by the definition of
$\mathcal{G}[p]$. A similar argument applies if
$\Sigma_{ik} B_{jk} \neq 0$.

In conclusion, $(B_{11}, \tilde{C})$ is
$\mathcal{G}[p]$-compatible, and since it is assumed that
$B_{11} \in \mathrm{Mat}_0$ we have that 
\begin{equation*}
\Sigma_{11} = \Sigma(B_{11}, \tilde{C}) \in
\mathcal{M}^e_{\mathcal{G}[p]}. \qedhere
\end{equation*} 
\end{proof}



\begin{figure}
\begin{tikzpicture}
\tikzset{vertex/.style = {shape=circle,draw,minimum size=1.5em, inner sep = 0pt}}
\tikzset{vertexdot/.style = {shape=circle,fill=red,color=red,draw,minimum size=0.5em, inner sep = 0pt}}
\tikzset{vertexhid/.style = {shape=rectangle,draw,minimum size=1.5em, inner sep = 0pt}}
\tikzset{edge/.style = {->,> = latex', thick}}
\tikzset{edgeun/.style = {-, thick}}
\tikzset{edgebi/.style = {<->,> = latex', thick}}

\node at (0, 2.5) {(A)};
\node at (4, 2.5) {(B)};

\node[vertex] (A) at (-1.2, 1.2) {$1$};
\node[vertex] (B) at (1.2, 1.2) {$2$};
\node[vertex] (C) at (1.2, -1.2) {$3$};
\node[vertex] (D) at (-1.2, -1.2) {$5$};
\node[vertex] (E) at (0, 0) {$4$};

\draw[edge] (A) edge[bend right  = 20, color=blue] node[above, color=black, scale=0.7] {$-1$} (B);
\draw[edge] (A) edge[loop left, color=blue] node[left, color=black, scale=0.7] {$-1$} (A);
\draw[edge] (B) edge[bend right  = 20, color=blue] node[above, color=black, scale=0.7] {$1$} (A);
\draw[edge] (C) edge[bend right  = 15, color=blue] node[right, color=black, scale=0.7] {$0.2$} (B);
\draw[edge] (C) edge[bend left  = 15, color=blue]  node[above, color=black, scale=0.7] {$1$} (D);
\draw[edge] (C) edge[loop right, color=blue] node[right, color=black, scale=0.7] {\ $-1$} (C);
\draw[edge] (E) edge[loop above, color=blue] node[right, pos = 0.7, color=black, scale=0.7] {\ $-1$} (E);
\draw[edge] (E) edge[bend left  = 15, color=blue]  node[right, pos=0.3, color=black, scale=0.7] {$-0.5$} (C);
\draw[edge] (D) edge[bend left = 15, color=blue]   node[left, pos=0.7, color=black, scale=0.7] {$1$ \ } (E);
\draw[edge] (D) edge[loop left, color=blue] node[left, color=black, scale=0.7] {$-1$} (D);

\draw[edgebi, loop above, color=red] (A) to (A);
\draw[edgebi, loop above, color=red] (B) to (B);
\draw[edgebi, loop below, color=red] (C) to (C);
\draw[edgebi, loop below, color=red] (D) to (D);
\draw[edgebi, loop below, color=red] (E) to (E);

\node[vertex] (A1) at (-1.2+4, 1.2) {$1$};
\node[vertex] (B1) at (1.2+4, 1.2) {$2$};
\node[vertex] (C1) at (1.2+4, -1.2) {$3$};
\node[vertex] (E1) at (4, 0) {$4$};

\draw[edge, loop left, color=blue] (A1) to (A1);
\draw[edge, loop above, color=blue] (E1) to (E1);

\draw[edge, bend right = 20, color=blue] (A1) to (B1);
\draw[edge, bend right  = 15, color=blue] (C1) to (B1);
\draw[edge, bend right  = 20, color=blue] (B1) to (A1);
\draw[edge, bend left  = 15, color=blue] (E1) to (C1);

\draw[edgebi] (E1) edge[bend right = 15, color=red] node[left, pos = 0.8, color=black, scale=0.7] {$0.20$ \ } (C1);
\draw[edgebi] (E1) edge[bend left = 15, color=red] node[left, color=black, scale=0.7] {$0.05$ \ } (A1);
\draw[edgebi] (E1) edge[bend right = 15, color=red] node[right, pos = 0.3, color=black, scale=0.7] {\ $0.07$} (B1);
\draw[edgebi] (A1) edge[loop above, color=white] node[above, color=black, scale=0.7] {$1$} (A1);
\draw[edgebi] (B1) edge[loop above, color=white] node[above, color=black, scale=0.7] {$1$} (B1);
\draw[edgebi] (C1) edge[loop below, color=white] node[below, color=black, scale=0.7] {$1$} (C1);
\draw[edge] (C1) edge[loop right, color=blue] (C1);
\draw[edgebi] (E1) edge[loop below, color=white] node[left, color=black, scale=0.7] {$1.60$ \ \ } (E1);
\draw[edgebi, loop above, color=red] (A1) to (A1);
\draw[edgebi, loop above, color=red] (B1) to (B1);
\draw[edgebi, loop below, color=red] (C1) to (C1);
\draw[edgebi, loop below, color=red] (E1) to (E1);

\end{tikzpicture} 
\caption{Mixed graphs representing a GCLM with $p = 5$ nodes (A) and the extended GCLM
  (B) obtained by marginalization of (A). The larger model (A) has $C
  = I$ and the nonzero entries of $B$ are shown as edge weights for
  the directed edges. The marginalized model (B) has the same directed
  edge weights and the nonzero entries of $\tilde{C}$ are shown as
  edge weights for the bidirected edges. \label{fig:graph0}}
\end{figure}
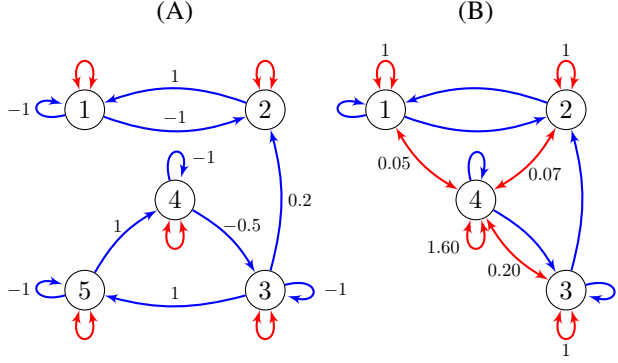

\subsection{EXAMPLE}

Consider the GCLM with $\mathcal{G}$ as given by (A) in Figure \ref{fig:graph0}. In this example $p =
  5$ and the only bidirected edges are self loops. The directed part of 
	$\mathcal{G}$ is the local independence graph of the stochastic process,
	see Section \ref{sec:li}.
	
	The specific model has 
$$
{\small
B = \left(\begin{array}{rrrrr}
-1 & 1 & . & . & . \\ 
  -1 & . & 0.2 & . & . \\ 
  . & . & -1 & -0.5 & . \\ 
  . & . & . & -1 & 1 \\ 
  . & . & 1 & . & -1 \\ 
\end{array}\right)
}
$$
and $C = I_5$ the identity matrix. The eigenvalues of $B$ are 
$$ -1.79,  \ -0.60 \pm 0.69i, \text{ and }
-0.50 \pm 0.87i,$$
with all real parts strictly negative, whence $B$ is stable. The
graphical projection when projecting away node 5 is shown in  Figure
\ref{fig:graph0} (B). The only directed edge out of 5 is $5 \rightarrow 4$, 
and it follows from the projection map that the added bidirected edges are
$4 \leftrightarrow 1$, $4 \leftrightarrow 2$ and $4 \leftrightarrow 3$. 
In this example, $B_{11}$ is, in fact, still a stable matrix, and 
by solving the Lyapunov equation in terms of $B$ and
$C$ the $\tilde{C}$ matrix was computed to be 
$$
{\small
\tilde{C} = \left(\begin{array}{rrrr}
1 & . & . & 0.05 \\ 
  . & 1 & . & 0.07 \\ 
  . & . & 1 & 0.20 \\ 
  0.05 & 0.07 & 0.20 & 1.60 \\ 
\end{array}\right).
}
$$
The graphical projection in Figure \ref{fig:graph0} (B) should be compared to the 
graphical projection of the local independence graph, \citep{Mogensen:2020,
Mogensen:2018uai}, which introduces a directed edge from node 3 to node 4 instead 
of the three bidirected edges. That projection represents local independences 
of the marginalized nodes \citep{Mogensen:2020}. We have not developed 
a notion of separation for the mixed graph in Figure \ref{fig:graph0} (B), and it does not 
represent local independence among the marginalized nodes directly. However,
its representation of the parametrization of the marginalized equilibrium 
covariance matrix allows us to read of direct causal effects among the 
observed nodes when the model of all nodes is a structural causal stochastic 
differential equation.

\section{STRUCTURE RECOVERY}

We propose minimizing an $\ell_1$-penalized loss to estimate 
the directed part of a GCLM as given by the $B$ matrix in
\ref{eq:CLE}. The $C$ matrix will be held diagonal.

Specifically, we suggest estimating $(B,C)$ by solving the following
optimization problem for a generic differentiable loss function 
$L:\mathrm{PD}(p) \to \mathbb{R}$:

\begin{equation} 
\begin{array}{ll}
	\text{minimize}  & L\left(\Sigma(B,C) \right) + 
	\lambda \rho_1(B)  + \kappa ||C - I_p||_F^2 \\
	\text{subject to} &   B \text{ stable and } C \text{ diagonal},
\end{array}
	\label{eq:l1penalB}
\end{equation}
where $\lambda, \kappa \geq 0$ are regularization parameters
and $\rho_1(B) = \sum_{i\neq j} |B_{ij}|$ is the $1$-norm 
 of the off-diagonal entries of $B$. 
 The penalization term involving the Frobenius norm of the 
 difference between $C$ and the identity matrix is necessary, 
 since the pair $(B,C)$ can only be identified up to a multiplicative constant. 
 Letting $\kappa = + \infty$, we obtain as a special case an
 estimator of $B$ with $C=I_p$ fixed. Smaller values of
 $\kappa$ allow for $C$ matrices with diverging diagonal entries.  
 
 Examples of loss functions are the negative
 Gaussian log-likelihood
 \[ 
   \log\det \Sigma + \tr\left(\hat{\Sigma} \Sigma^{-1}\right),
 \] 
 and the squared Frobenius loss 
 \[\| \Sigma - \hat{\Sigma} \|_{F}^2 = 
 \sum_{i,j} \left(\Sigma_{ij} - \hat{\Sigma}_{ij} \right)^2, \] 
 for a given positive semi-definite matrix $\hat{\Sigma}$.

We use a variation of the proximal gradient algorithm for solving
\eqref{eq:l1penalB}, see ~\citep{parikh2014},
even though the optimization problem is in general non-convex. 
The proximal operator for $\ell_1$-penalization is soft-thresholding ($\mathcal{S}_{t}(x) = 
\sign(x)\left(|x| - t \right)$), and each iteration of the algorithm
amounts to  
\begin{align*} 
	C^{(k)} &= C^{(k-1)} - st\kappa (C^{(k-1)} - I_p) \\
	&- s t (\nabla_C L(\Sigma( B^{(k-1)}, C^{(k-1)})))    \\
	B^{(k)} &= \mathcal{S}_{sr\lambda}\left( B^{(k-1)} - sr\nabla_B
	L(\Sigma( B^{(k-1)}, C^{(k-1}))) \right), 
 \end{align*} 
where soft-thresholding of a matrix is defined elementwisely.
The global step size $s$ is chosen using line search as in~\cite{beck2010} 
once the independent steps $t$ and $r$ have been chosen small enough
that $C^{(k)}$ is positive definite and $B^{(k)}$ is stable.

Detailed pseudo-code of our proposed 
proximal gradient based algorithm is given as
Algorithm~\ref{alg:proxgradb}. 


The gradients with respect to $B$ and $C$ can be obtained with the 
cost of solving one additional Lyapunov equation as shown in the 
following proposition.
\begin{prop}
	The gradient of $L(\Sigma(B,C))$ 
	with respect to $(B,C)$ can be computed as follows, 
	\begin{align*} 
		\nabla_B(L(\Sigma(B,C))) &= 2\Sigma(B,C)\Sigma(B^t, \nabla L), \\ 
		\nabla_C (L(\Sigma(B,C))) &= 2 \Sigma(B^t, \nabla L),
	\end{align*}
	where $\nabla L$ denotes the gradient of $\Sigma \mapsto L(\Sigma)$. 
\end{prop}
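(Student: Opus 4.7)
The plan is to apply the chain rule to $L\circ\Sigma$. The differential of the implicit map $(B,C)\mapsto\Sigma(B,C)$ is obtained by perturbing the Lyapunov equation, and the integral representation from Proposition~\ref{prop:intrep} is then used to evaluate the resulting trace. Differentiating $B\Sigma+\Sigma B^T+C=0$ yields
\begin{equation*}
B\,d\Sigma+d\Sigma\,B^T=-\bigl(dC+dB\cdot\Sigma+\Sigma\cdot dB^T\bigr),
\end{equation*}
so $d\Sigma$ itself solves a Lyapunov equation with coefficient $B$. Since $B\in\mathrm{Stab}(p)$, and hence $B^T$ is stable as well, Proposition~\ref{prop:intrep} gives the explicit form
\begin{equation*}
d\Sigma=\int_0^\infty e^{uB}\bigl(dC+dB\cdot\Sigma+\Sigma\cdot dB^T\bigr)e^{uB^T}\,du.
\end{equation*}

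Next, the chain rule gives $dL=\tr(\nabla L\cdot d\Sigma)$, where $\nabla L$ may be taken symmetric since $L$ is a function on symmetric matrices. I would substitute the integral expression and use cyclicity of the trace under the integral sign to group the exponentials around $\nabla L$, then apply Proposition~\ref{prop:intrep} a second time---now to the Lyapunov equation $B^TX+XB+\nabla L=0$---to identify the resulting inner integral as $P:=\Sigma(B^T,\nabla L)$. This collapses the differential to
\begin{equation*}
dL=\tr\bigl(P\cdot(dC+dB\cdot\Sigma+\Sigma\cdot dB^T)\bigr).
\end{equation*}

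From this compact expression the two gradients are read off directly. For $B$, symmetry of $P$ and of $\Sigma$ lets the two $dB$-type contributions be merged via $\tr(P\Sigma\,dB^T)=\tr((\Sigma P)^T dB)=\tr(\Sigma P\,dB)$, which produces the factor of two and yields $\nabla_B L=2\Sigma(B,C)\Sigma(B^T,\nabla L)$. For $C$, the contribution $\tr(P\,dC)$ gives the claimed $\nabla_C L=2\Sigma(B^T,\nabla L)$ under the symmetric-matrix gradient convention adopted in the paper. The main technical care lies in the symmetry bookkeeping---verifying that $P$ is symmetric, which follows from symmetry of $\nabla L$ and the uniqueness of the Lyapunov solution with coefficient $B^T$---so that the two $dB$-type terms can legitimately be combined; the interchange of trace and integral is routine because stability of $B$ makes the integrand exponentially decaying in $u$.
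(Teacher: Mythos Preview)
Your proof is correct and follows the same strategy as the paper: differentiate the Lyapunov equation so that $d\Sigma$ itself solves a Lyapunov equation, apply the chain rule, and then transfer the Lyapunov operator from $d\Sigma$ onto $\nabla L$ via the adjoint identity $\tr(\Sigma(B,M)\,N)=\tr(M\,\Sigma(B^{T},N))$. The only cosmetic difference is that the paper works entrywise and cites the adjoint identity from \cite{bhatia1997}, whereas you work with matrix differentials and derive that identity directly from the integral representation and trace cyclicity---the underlying mechanism is identical.
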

\begin{proof}
Similar to \citet{Malago2018} we differentiate  
the Lyapunov equation  and  we obtain:  
\begin{align*} 
	&B \frac{\partial \Sigma(B,C)}{\partial B_{ij}} + 
\frac{\partial \Sigma(B,C)}{\partial B_{ij}} B^t + Q_{(i,j)}(B,C) = 0, \\
	&Q_{(i,j)}(B,C) = E_{(i,j)}\Sigma(B,C) + \Sigma(B,C) E_{(j,i)}, 
\end{align*}
where 
	$\left(E_{(i,j)}\right)_{kl} = \delta_{ik}
\delta_{jl}$ with $\delta_{ij}$ the usual Kronecker delta.
The Jacobian components  
are thus solutions of  Lyapunov equations, 
\begin{equation}
	\frac{\partial \Sigma(B,C)}{\partial B_{i,j}} = 
	\Sigma\left(B, Q_{(i,j)}(B,C)\right).
	\label{eq:gradB}  
\end{equation}

Thanks to \eqref{eq:gradB} we can compute 
the gradient of any function, which is a composition 
of $\Sigma(B,C)$ and a differentiable function over the cone of positive 
definite matrices \mbox{$L: \mathrm{PD}(p) \to \mathbb{R}$}, as
{\small 
\begin{equation}
	\frac{\partial L\left(\Sigma(B,C)\right)}{\partial B_{ij}} 
	= \tr\left( \Sigma(B,Q_{(i,j)}) 
	\frac{\partial L(\Sigma(B,C))}
    {\partial \Sigma}\right).
	\label{eq:gradb}
 \end{equation}
 }

We note now that, for fixed stable $B$, $\Sigma(B,\cdot)$ is a linear operator 
on the symmetric matrices with adjoint operator given by $\Sigma(B^t, \cdot)$
\citep{bhatia1997}. That is,
\begin{equation*}
	\tr\left( \Sigma(B, C) D \right) = \tr\left( C \Sigma(B^t, D) \right).
\end{equation*}

Thus from \eqref{eq:gradb} we obtain the desired 
expression for the gradient,
\begin{align*}
	\frac{\partial L\left( \Sigma(B,C) \right)}{\partial B_{ij}} = & 
	\tr\left(Q_{(i,j)} \Sigma\left( B^t, \nabla L \right) \right) \\  
   = & \left( 2\Sigma(B,C)\Sigma(B^t,\nabla L) \right)_{ij}. 
\end{align*}
	The formula for $\nabla_C(L(\Sigma(B,C)))$ can be obtained analogously.
\end{proof}
  
\begin{algorithm}[h]
	\caption{Proximal gradient algorithm for minimization of $\ell_1$-penalized loss}
	\label{alg:proxgradb}
	\begin{algorithmic}[1]
		\small
		\REQUIRE  $L:\mathrm{PD}(p) \to \mathbb{R} \text{ differentiable}$, \\
		\hspace{8pt} $B_0 \in \mathrm{Stab}(p)$, \\  
		\hspace{8pt} $M \in \mathbb{N}$,  
		 $\varepsilon, \lambda, \kappa > 0, \alpha \in (0,1)$ 
	        \STATE{$B = B_0$, $C = I_p$}
		\STATE{$\Sigma = \Sigma(B,C)$ }
	        \REPEAT 
		\STATE{$f = L(\Sigma)  + \kappa ||C - I_p||^2_F$}
		\STATE{$g =  \lambda \rho_1(B) $}
		\STATE{$D = \Sigma(B^t, \nabla L)$}
		\STATE{$\nabla_C = 2\operatorname{diag}(D) 
		          + 2\kappa (C - I_p)$}
		\STATE{$\nabla_B = 2\Sigma D$}
		\STATE{$t = \max\{ 0 \leq  u \leq 1 : 
		C - u \nabla_C \in \mathrm{PD}(p) \}$}
                \STATE{$r=\max\{ 0 \leq  u \leq 1 : 
		\mathcal{S}_{u\lambda}(B - u \nabla_B) \in \mathrm{Stab}(p) \}$}
                \STATE{$s = 1$} 
		\LOOP
		\STATE{$B' = \mathcal{S}_{sr\lambda}(B - s r \nabla_B)$ }
		\STATE{$C' = C - st \nabla_C$}
		\STATE{$\Sigma' = \Sigma(B', C')$ \label{line:le}}
		\STATE{$f' = L(\Sigma') + \kappa ||C - I_p||^2_F$}
		\STATE{$g' =  \lambda \rho_1(B') $}
		\STATE{$\nu =\frac{1}{2s}(\frac{1}{r}||B - B'||_F^2 + 
		\frac{1}{t}||C - C'||_F^2)$ \\  
		\hspace{0.2in} $+ \tr( (B' - B)\nabla_B) + \tr((C' - C) \nabla_C) $}
                \IF{$f' + g' \leq f + g$ \AND   $f' \leq f + \nu $ }
		\STATE{\textbf{break}}
		\ELSE 
		\STATE{$s = \alpha s$}
		\ENDIF
		\ENDLOOP 
		\STATE{$\delta = (f + g - f' - g') $}
		\STATE{$\Sigma = \Sigma', B = B', f =f'$}
		\UNTIL{$k > M$ \OR $\delta < \varepsilon$}
		\ENSURE   $B, C, \Sigma$ such that $\Sigma = \Sigma(B,C)$
	\end{algorithmic}
\end{algorithm}

The Lyapunov equations are solved by the Bartels-Stewart
algorithm~\citep{bartels1972} as implemented in
LAPACK~\citep{anderson1999}.  The Bartels-Stewart algorithm consists
of computing the Schur decomposition of the matrix $B$ and then
solving a simplified equation by back-substitution. Observe that
to solve the 
additional Lyapunov equation in the gradient equation the Schur decomposition 
of $B$ can be used and thus it is only computed once in
each iteration (in line~\ref{line:le} in Algorithm \ref{alg:proxgradb}).  Moreover, it is immediate to check
the stability of $B$ from the diagonal elements of its
Schur canonical form.  The run time complexity of one step of the
Algorithm~\ref{alg:proxgradb} is thus $\mathcal{O}(p^3)$.

%

\subsection{REGULARIZATION PATHS} 

As for lasso, \citep{friedman2010}, and graphical lasso,
\citep{friedman2007}, problem~\eqref{eq:l1penalB} is to be solved for
a sequence of regularization parameters
$\lambda_1 < \lambda_2 < \ldots < \lambda_k$. We have implemented the
natural continuation algorithm where the solution $(B_{i-1}, C_{i-1})$ for
$\lambda = \lambda_{i-1}$ is used as initial value of
Algorithm~\ref{alg:proxgradb} for $\lambda = \lambda_i$. Note,
however, that contrary to e.g. \texttt{glmnet}, \citep{friedman2010},
our continuation algorithm starts from a dense estimate and moves
along the regularization parameters in increasing order toward sparser
and sparser solutions. There is no immediate reason for this choice as
the regularization path could be computed, in principle, from sparse 
to dense solutions as in the classical lasso and graphical lasso paths.  
However we empirically observed that better results were obtained 
using an increasing sequence of regularization parameters.


\subsection{DIRECT LASSO PATH} 
\label{sec:lasso}

\cite{fitch2019} suggests estimating $B$ as a sparse,
approximate solution to the Lyapunov equation for
$\Sigma$ fixed and equal to the empirical covariance
matrix, $\hat{\Sigma}$. For fixed $\lambda$ the estimate is the solution to the lasso problem
\begin{equation}
 \begin{array}{ll}
	 \text{minimize}  & \|B\hat{\Sigma} + \hat{\Sigma} B^t + C\|_F^2 + 
	 \lambda\rho_1(B).
\end{array}
	\label{eq:lassoB}
\end{equation}
for a fixed $C$. 
In \cite{fitch2019} all the entries of 
the $B$ matrix are actually penalized, and not only the off-diagonal 
entries as in Equation~\eqref{eq:lassoB}. 

The resulting
\emph{direct lasso path} for a sequence of regularization parameters
can be computed easily by either coordinate descent, \citep{friedman2010}, or 
least angle regression, \citep{efron2004}. 


%
%
%
%
%
%

\section{SIMULATIONS} 

We carried out a simulation study to evaluate the performance of our
proposed estimator and algorithm. The metrics used focus on recovery
of the underlying oriented part of the graph. Performance was
evaluated for Algorithm~\ref{alg:proxgradb} using the negative Gaussian
log-likelihood (\texttt{mloglik-inf} and \texttt{mloglik-0.01}) 
as well as the Frobenius loss
(\texttt{frob-inf}). 
For \texttt{mloglik-inf} and \texttt{frob-inf} we fixed $C=I_p$ (that is, 
$\kappa = +\infty$) while for \texttt{mloglik-0.01} we fixed $\kappa = 0.01$ in 
Algorithm~\ref{alg:proxgradb}. 
The obtained paths were compared to the results
for the direct lasso path (\texttt{lasso}), the graphical lasso (\texttt{glasso}) for
undirected structure recovery~\citep{friedman2007}, and the simpler
covariance thresholding method (\texttt{covthr})~\citep{sojoudi2016}.

Each GCLM was generated by simulating a stable matrix $B$ with entries 
$B_{ij} = \omega_{ij} \varepsilon_{ij}$ for $i\neq j$ and $B_{ii} = - \sum_{j \neq i} 
| B_{ij}| - | \varepsilon_{ii} |$ where $\omega_{ij} \sim \text{Bernoulli}(d)$ 
and $\varepsilon_{ij} \sim N(0,1)$. Moreover, we generated diagonal $C$ matrices 
with $C_{ii} \sim \text{Uniform([0,1])}$. 
Note that each such $(B, C)$ pair has a corresponding
mixed-graph $\mathcal{G}$ whose only bidirected edges are $i
\leftrightarrow i$ and whose directed edges are generated
independently and with uniform probability $d$. 


We generated models of sizes $p = 10, \ldots, 100$ and with
edge probabilities 
$d = \frac{k}{p}$ with $k \in 
\{1,2,3,4\}$. 
For each pair $(p, k)$ we generated $100$ GCLMs
as described above and applied the different structure recovery methods using 
$N = 1000$ observations from a multivariate Gaussian
distribution with covariance matrix solving the Lyapunov equation. 


To further explore the stability of the structure recovery under different levels 
of marginalization, 
we considered
the problem of recovering the directed part of the graph $\mathcal{G}[10]$ for the 
first $10$ coordinates. 
This simulation scenario corresponds to marginalized models,  
as described in Section~\ref{sec:marginalization}.

\subsection{DETAILS OF THE COMPARED METHODS} 

For each method but \texttt{covthr} we obtained a solution path along
a log-regular sequence of $100$ regularization parameters
\[0  < \frac{\lambda_{\max}}{10^{4}} = 
\lambda_1 <  \ldots < \lambda_{100} = \lambda_{\max}.\] 
For our methods we 
used $\lambda_{\max} = 6$. For \texttt{lasso}, $\lambda_{\max}$ was the  
 smallest 
penalization parameter such that 
the matrix $B$ was diagonal. For \texttt{glasso}, 
$\lambda_{\max} = \max\{ \hat{\Sigma}_{ij}  \}$, 
resulting in a path similar to the default 
in the {glasso} R package, \citep{glasso}. 
For covariance thresholding (\texttt{covthr}) we 
obtained instead a solution path by thresholding the absolute values
in the sample covariance matrix at its off-diagonal entries. 

In Algorithm~\ref{alg:proxgradb} the relative convergence tolerance was
$\varepsilon = 10^{-4}$, the maximum number of iterations 
was $M = 100$ and $\alpha = 0.5$. 

\begin{figure}[h]
	\centering
	\includegraphics[scale=1]{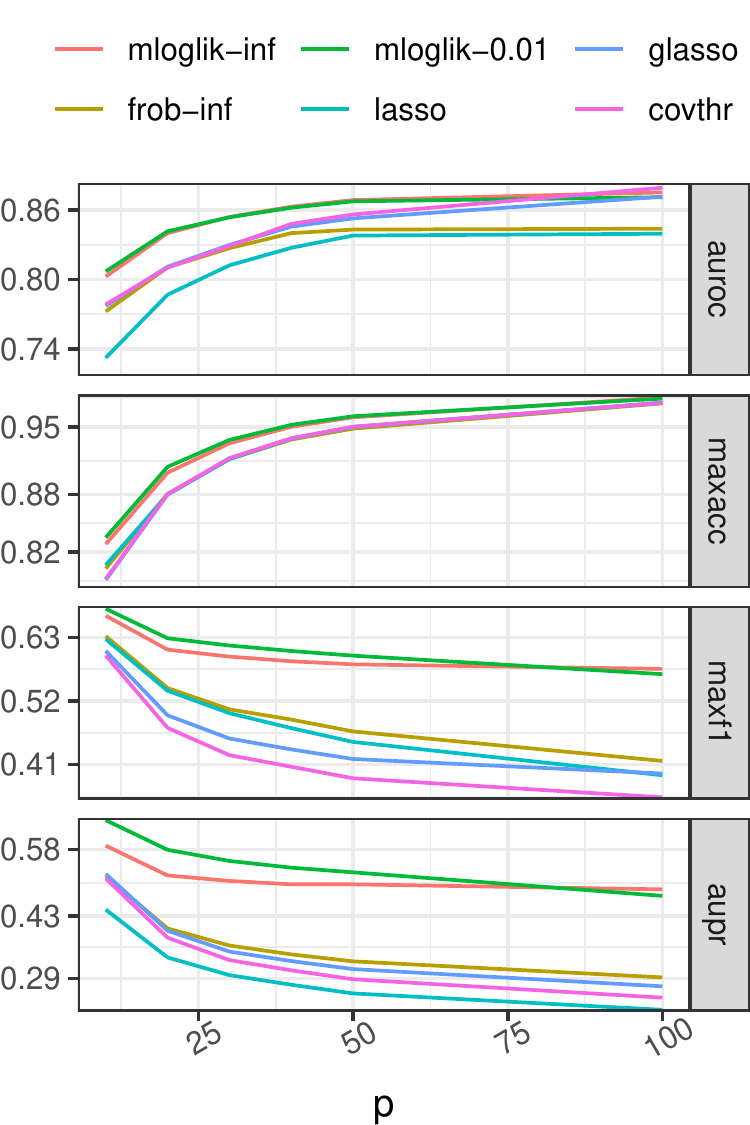}
	\caption{Structure recovery simulation results.
		Average evaluation metrics (rows) as a function of the model size  
	for different algorithms (colors).
	}
	\label{fig:simulated}
\end{figure}


Data was standardized, which means that all methods used the 
empirical correlation matrix, $\hat{R}$, of the sample, 
and for \texttt{lasso} we fixed
$C$ to the identity matrix. Finally, Algorithm~\ref{alg:proxgradb} 
was initialized
with the stable and symmetric matrix 
$B_0 = -\frac{1}{2} \hat{R}^{-1}$ fulfilling $\hat{R} = \Sigma(B_0,
I_p)$. 

\begin{figure}
	\centering
	\includegraphics[scale=1]{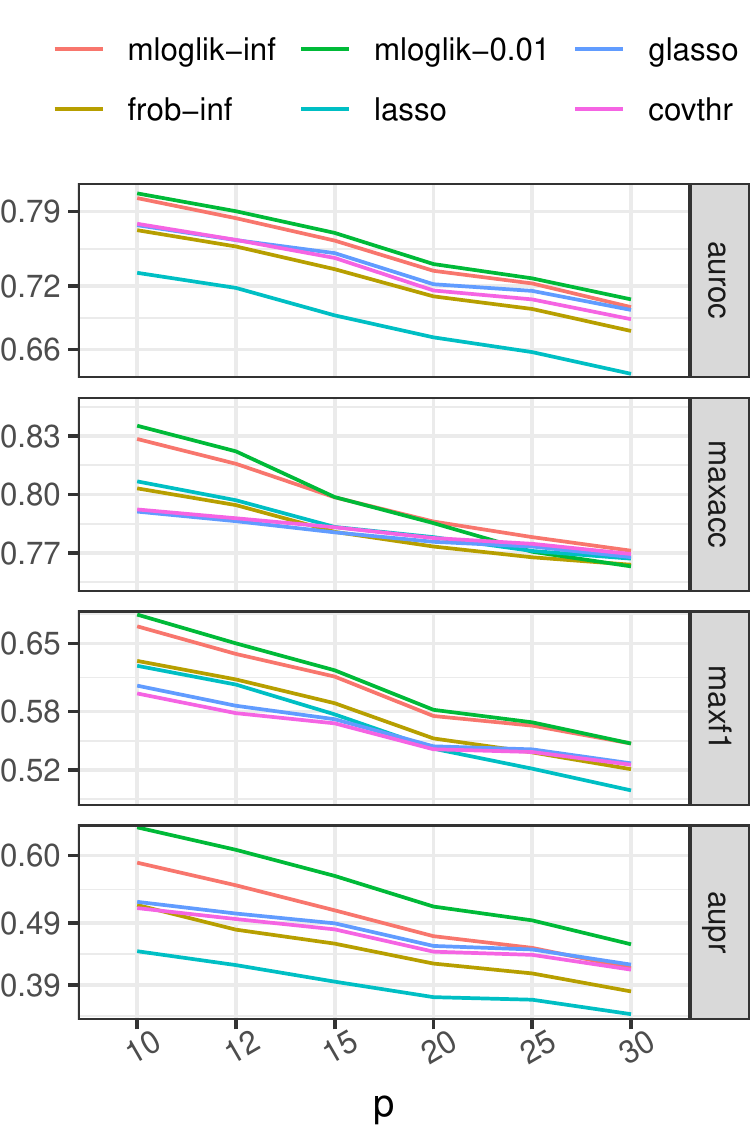}
	\caption{
		Recovery of marginalized model simulation results.
		Average evaluation metrics (rows) as a function of the model size  
	for different 
	algorithms (colors).
	}
	\label{fig:marginalized}
\end{figure}

\begin{figure}
        \centering
	\includegraphics[scale = 1]{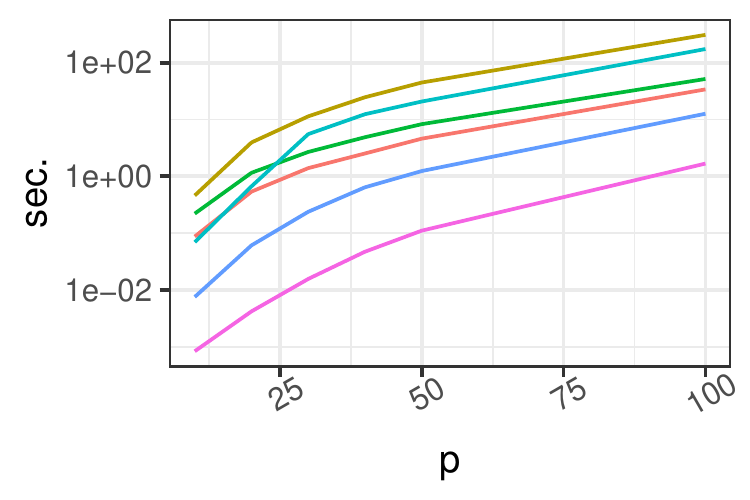}
	\caption{Average run times as a function
	of the system size ($p$) for different methods (colors).}
	\label{fig:times}
\end{figure}

\subsection{RESULTS} 

Each method gives a solution path of graphs for a sequence of 
regularization parameters. We computed the following metrics to 
evaluate the methods: 
\begin{itemize}
	\item The path-wise maximum accuracy of edge recovery (\texttt{maxacc}). 
	\item The path-wise maximum F1 score (\texttt{maxf1}).   
	\item The area under the ROC curves (\texttt{auroc}), 
		obtained as the 
		true positive rate vs the false positive rate
		for each value of the regularization parameter. 
	\item The area under the precision-recall curves (\texttt{aupr}), 
		obtained as the
		precision vs the recall for each value of the 
		regularization parameter.  
\end{itemize}
All the above metrics were computed considering the graph recovery as a
classification problem over the $p(p-1)$ off-diagonal elements 
of the adjacency matrix. 
In particular, undirected graphs obtained with the  
methods \texttt{glasso} and \texttt{covthr} are 
evaluated as directed graphs where each undirected edge is 
translated into the two possible directed edges. 


Figure~\ref{fig:simulated}  shows the results from the simulation
experiments averaged over the $100$
repetitions and the different edge densities,  
Figure~\ref{fig:marginalized} shows the results from the simulation 
experiment with marginalized models. 

From Figure~\ref{fig:simulated} we observe 
that among our proposed methods, using the negative log-likelihood was
always
better than the Frobenius loss. 
Across all simulations, \texttt{mloglik-inf} and \texttt{mloglik-0.01} 
were clearly superiors to the other methods with respect to all
our evaluation metrics. For these two methods the evaluations were 
highly similar with the exception of the precision-recall curve where
\texttt{mloglik-0.01} obtained consistently higher results, especially in the 
recovery of marginalized models. 
Moreover, we observe that \texttt{frob-inf}  was superior to
\texttt{lasso} in the recovery of the true graph with respect to almost all
the metrics.  




In Figure~\ref{fig:times} the average run times of the different methods are 
reported. We observe that there is practically no difference in the run
times between fixing $C=I_p$ (\texttt{mloglik-inf}) and allowing 
the estimation of a diagonal $C$ matrix (\texttt{mloglik-0.01}). 
Also it is interesting to note that the run time of the \texttt{lasso} 
method is equal to the \texttt{mloglik} methods for large systems. 
While \texttt{frob-inf} requires approximately one order of magnitude more
time to reach convergence (or the maximum number of iterations) then 
\texttt{mloglik-inf}. Given that each iteration of
Algorithm~\ref{alg:proxgradb} is computationally more expensive using 
the negative log-likelihood than the Frobenius loss, we deduce that
\texttt{frob-inf} requires in general a much higher number of iterations 
to converge. 

\section{PROTEIN-SIGNALING NETWORKS}

We apply the proposed method with log-likelihood loss to the flow-cytometry 
data in \cite{sachs2005} containing observations of
$11$ phosphorylated proteins and
phospholipids from $n = 7466$ cells.  
Data were recorded under nine different conditions consisting of 
nine different
stimulatory and inhibitory interventions. 

We apply the following procedure, 
inspired by stability selection methods~\citep{meinshausen2010}.

\begin{enumerate}
	\item Randomly split the observations in two subsets with 
		the same cardinality: \textit{Train} and \textit{Test}. 
	\item Apply Algorithm~\ref{alg:proxgradb} using the 
		estimated correlation matrix from \textit{Test},
		to obtain the estimated $B$ 
		matrices along a regularization path. 
	\item Fit the maximum-likelihood estimators 
		(using a minor modification of 
		Algorithm~\ref{alg:proxgradb} with $\lambda = 0$) 
		for all the structures obtained in the previous point. 
	\item Select the structure that obtains the maximum likelihood 
		with respect to the empirical covariance 
		matrix of \textit{Test}. 
\end{enumerate}
After repeating $200$ times the above selection based on random-splitting  
we compute the number of times each edge was selected.  

Figure~\ref{fig:sachs} shows the resulting graph obtained by retaining
directed edges appearing in at least $85\%$ 
of the repetitions.

We observe that the method retrieves edges  consistent with the 
ground truth of 
conventionally accepted interactions~\citep{sachs2005, meinshausen2016}. 
In particular, the estimated graph in 
Figure~\ref{fig:sachs} contains  
8 of the 18 edges reported in \cite{sachs2005}, among them: the regulatory 
interactions between PKA and Mek, p38, Erk; the relationships JNK $\leftarrow$ PKC $\rightarrow$ p38; and PLC $\rightarrow$ PIP2 $\leftarrow$ PIP3.  
We observe that our model estimate also some cycles, in particular the 
interactions PLC $\leftrightarrow$ PIP2, JNK $\leftrightarrow$ PKC $\leftrightarrow$
P38 and Mek $\leftrightarrow$ Raf which have been recovered in the literature
by other approaches~\citep{meinshausen2016}.


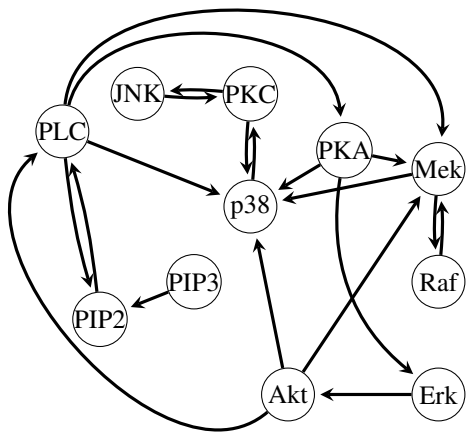
\begin{figure}
	\centering
\tikzset{vertex/.style = {shape=circle,draw,minimum size=2em, inner sep = 0pt}}
\tikzset{vertexdot/.style = {shape=circle,fill=red,color=red,draw,
	minimum size=0.5em, inner sep = 0pt}}
\tikzset{vertexhid/.style = {shape=rectangle,draw,
	minimum size=1.5em, inner sep = 0pt}}
\tikzset{edge/.style = {->,> = latex', thick}}
\tikzset{edgeun/.style = {-, thick}}
\tikzset{edgebi/.style = {<->,> = latex', thick}}
\tikzset{
	node/.style={circle,inner sep=1mm,minimum size=1.2cm,
	draw,very thick,black,fill=red!20,text=black, scale = 0.7},
	nondirectional/.style={very thick,black},
	unidirectional/.style={nondirectional,shorten >=2pt,-stealth},
	bidirectional/.style={unidirectional,bend right=5}
}
\begin{tikzpicture}[scale=5]
	\node [vertex] (v1) at (1.000000, 0.300000)	{Raf};
	\node [vertex] (v2) at (1.000000, 0.600000)	{Mek};
	\node [vertex] (v3) at (0.000000, 0.700000)	{PLC};
	\node [vertex] (v4) at (0.100000, 0.200000)	{PIP2};
	\node [vertex] (v5) at (0.350000, 0.300000)	{PIP3};
	\node [vertex] (v6) at (1.000000, 0.000000)	{Erk};
	\node [vertex] (v7) at (0.600000, 0.000000)	{Akt};
	\node [vertex] (v8) at (0.750000, 0.650000)	{PKA};
	\node [vertex] (v9) at (0.500000, 0.800000)	{PKC};
	\node [vertex] (v10) at (0.500000, 0.500000)	{p38};
	\node [vertex] (v11) at (0.200000, 0.800000)	{JNK};

	\path [bidirectional] (v2) edge (v1);
 	\path [bidirectional] (v1) edge (v2);
	\path [unidirectional] (v2) edge (v10);
	\path [unidirectional, bend left=90] (v3) edge (v2);
	\path [unidirectional, bend left=80] (v3) edge (v8);
	\path [unidirectional] (v3) edge (v10);
	\path [bidirectional] (v4) edge (v3);
 	\path [bidirectional] (v3) edge (v4);
	\path [unidirectional] (v5) edge (v4);
	\path [unidirectional] (v6) edge (v7);
	\path [unidirectional] (v7) edge (v2);
	\path [unidirectional, bend left=90] (v7) edge (v3);
	\path [unidirectional] (v7) edge (v10);
	\path [unidirectional] (v8) edge (v2);
	\path [unidirectional, bend right] (v8) edge (v6);
	\path [unidirectional] (v8) edge (v10);
	\path [bidirectional] (v10) edge (v9);
 	\path [bidirectional] (v9) edge (v10);
	\path [bidirectional] (v11) edge (v9);
 	\path [bidirectional] (v9) edge (v11);

\end{tikzpicture}
	\caption{Estimated graph from data in \cite{sachs2005}.
	Self loops and 
	bidirected edges are not plotted.}
	\label{fig:sachs}
\end{figure}

\section{DISCUSSION}

We have presented a novel graphical model yielding a parametrization
of covariance matrices via solutions of the continuous Lyapunov equation with 
parameter matrices $(B,C)$ compatible with a given mixed
graph. Using a trek representation and a graphical projection we showed that also
marginalized models can be parametrized by the continuous Lyapunov
equation. 

We investigated the performance of learning the directed part of the
graph via penalized loss minimization where we fixed $C$ to be a
diagonal matrix. A
similar approach was considered by \cite{fitch2019} where, moreover, the
matrix $C$ was fixed as the identity $I_p$. As shown in 
Section~\ref{sec:marginalization}, marginalization may
result in the $C$ matrix being increasingly misspecified and 
non-diagonal, thus 
the general deterioration of the performances for 
\texttt{mloglik-inf}, \texttt{mloglik-0.01}, 
\texttt{frob-inf} and \texttt{lasso} as in our
simulation experiment is to be expected. 

It was pivotal for our implementation of the proximal gradient
algorithm that gradients for the loss functions could be computed as
efficiently as possible. This was achieved via the representation of
the Jacobian of $\Sigma(B, I)$ via Lyapunov equations and exploiting 
the adjoint of the linear operator $\Sigma(B,\cdot)$.
When compared to the direct lasso path as proposed by
\cite{fitch2019}, our methods are computationally comparable, 
and even faster for larger systems, it appears. Moreover, our simulation experiment showed that minimizing
the $\ell_1$-penalized negative log-likelihood resulted in a more
efficient estimator of the directed part of the graph than using the 
Frobenius loss. 

\subsection{FUTURE DIRECTIONS}


One open problem is to estimate $C$ as a non-diagonal, but sparse,
matrix corresponding to the bidirected edges of the graph. This is
particularly interesting when we consider data from a marginalized
model. 
Imposing an additional penalty of the type $\lambda \rho_1(C)$ 
the corresponding proximal  
gradient-step is easily implemented to jointly estimate
sparse matrices $(B,C)$. However, the optimization problem 
becomes highly non-convex, and initial experiments suggest that 
the algorithm is easily trapped in local minima. We conjecture that 
these computational problems are closely related to the 
fundamental open problem of determining the joint 
identifiability of the $B$ and $C$ parameters from $\Sigma$. It is
ongoing work to provide answers to such identifiability questions and to
devise algorithms that are able to jointly estimate $B$ and $C$.

\subsection{REPRODUCIBILITY}  

Instructions and source files to replicate the examples and the experiments 
can be found at 
\url{https://github.com/gherardovarando/gclm_experiments}.
An R package is available from \url{https://github.com/gherardovarando/gclm},
implementing 
Algorithm~\ref{alg:proxgradb}. 

\subsubsection*{Acknowledgements}
The authors thank Mathias Drton for insightful discussions and 
feedback.  
This work was supported by VILLUM FONDEN (grant 13358).

\bibliographystyle{plainnat}
\bibliography{biblio}
\end{document}